\newcommand{\RR}{{\mathbb R}}
\newcommand{\be}{\begin{equation}}
\newcommand{\ee}{\end{equation}}
\newtheorem{thm}{Theorem}
\newtheorem{lemma}{Lemma}
\title{Generative Plug and Play:\\ 
Posterior Sampling for Inverse Problems}
\thanks{This work was partially supported by NSF grant number CCF-1763896.  CAB was partially supported by the Showalter Trust. The authors thank Katherine Bouman, Yu Sun, and Zihui Wu for their very helpful discussions.}}
\begin{document}
\maketitle

\begin{abstract}
Over the past decade, Plug-and-Play (PnP) \cite{venkatakrishnan2013school,sreehari2016TCI} has become a popular method for reconstructing images using a modular framework consisting of a forward and prior model. The great strength of PnP is that an image denoiser can be used as a prior model while the forward model can be implemented using more traditional physics-based approaches.
However, a limitation of PnP is that it reconstructs only a single deterministic image.

In this paper, we introduce Generative Plug-and-Play (GPnP), a generalization of PnP to sample from the posterior distribution. As with PnP, GPnP has a modular framework using a physics-based forward model and an image denoising prior model. However, in GPnP these models are extended to become proximal generators, which sample from associated distributions. GPnP applies these proximal generators in alternation to produce samples from the posterior. 
We present experimental simulations using the well-known BM3D denoiser \cite{DabovBM3D07}. Our results demonstrate that the GPnP method is robust, easy to implement, and produces intuitively reasonable samples from the posterior for sparse interpolation and tomographic reconstruction. Code to accompany this paper is available at \url{https://github.com/gbuzzard/generative-pnp-allerton}.
\end{abstract}

\keywords{Plug and Play, prior modeling, inverse problems}

\section{Introduction}

The recent explosion in new sensors has led to growing interest in integrating both physical and data driven models for scientific applications.  This approach captures the enormous power of modern machine learning methods to model empirical data while also incorporating the benefits of established physics models in imaging applications ranging from optics \cite{PellizzariBate2023} to X-ray CT \cite{MajeeBalke2021}.

A popular method for integrating physics and machine learning models is Plug-and-Play (PnP) \cite{venkatakrishnan2013school}. 
The key idea behind PnP is that an image denoising algorithm encodes prior information implicitly and can be used in place of a functional prior model commonly found in Bayesian approaches. 
In practice, PnP alternates the application of a forward model proximal map to fit data and a denoiser representing the prior model. 
When the denoiser is also a proximal map, then PnP can be viewed as an optimization algorithm \cite{sreehari2016TCI}. However, more generally PnP is the solution to an equilibrium condition, and under appropriate technical conditions, the algorithm is known to converge to a unique \cite{Buzzard2018} solution.

The desire to understand variation in possible solutions given limited, noisy measurements has driven interest in algorithms to sample from the posterior distribution. Generative adversarial networks (GAN) \cite{Goodfellow2014} and variational autoencoders \cite{KingmaWelling2013} are two possible methods for generating samples from a distribution described by training data. However, while conditional GANs allow the samples to be conditioned on another random quantity \cite{MirzaOsindero2014}, neither model provides a modular framework that can be decomposed as a forward and prior model, and GANs can be difficult to stably train \cite{ArjovskyBottou2017}. 

More recently, generative diffusion methods \cite{SongErmonNeurIPS2019} based on denoising score matching (DSM) \cite{PVincent2011,SongErmonNeurIPS2020} and Langevin dynamics \cite{GrenanderMiller94} have displayed remarkable generative capabilities. These algorithms do not require adversarial training and have been reported to produce very high quality results \cite{SongPooleICLR2021}. A number of groups have investigated the use of these generative diffusion methods as a prior model that works along with a separate physics-based forward model \cite{FengBoumanFreeman2017,JalalTamir2021,SongShenXingErmon2022,KlaskyYe2023}.

In this paper, we introduce Generative Plug-and-Play (GPnP), a method for sampling from the posterior distribution of a model. As with PnP, GPnP has a modular framework based on a forward and prior model in which the prior model is implemented with a denoiser. The GPnP algorithm alternately applies a forward model and a prior model, each in the form of a proximal generator. These proximal generators are similar in formulation to a proximal map but generate random rather than deterministic outputs. 

Our primary theoretical result is a theorem that this alternating sequence of random functions generates a Markov chain (MC) with the desired stationary distribution. We then show how the methods of denoising score matching \cite{PVincent2011} can be used to approximate the prior proximal generator with a denoiser plus some AWGN. We also describe how to compute or approximate the forward model proximal generator in several common cases. 

We note that GPnP differs from generative diffusion methods in that it (a) formulates the solution as the stationary distribution of a discrete-time MC; (b) does not use a Langevin dynamics to generate the solution; (c) incorporates proximal generators rather then gradient updates. 
However, we do show that in the special case of a null forward model, GPnP generates an MC that is exactly the Langevin dynamics for generation of samples from a prior distribution. 

We present experimental simulations using the well-known BM3D denoiser \cite{DabovBM3D07}. These results demonstrate that the GPnP method is robust, easy to implement, and produces intuitively reasonable samples from the posterior for sparse interpolation and tomographic reconstruction.

\section{Generative PnP Theory}
\label{sec:GenerativePnP}

Let $u_0 (x)$ and $u_1(x)$ be two non-negative integrable energy functions; that is, $u_0, u_1 :\RR^p \rightarrow [0,\infty)$ and 
$$
Z = \int_{\RR^p} \exp \left\{ - u_1 (x) - u_0 (x) \right\} dx < \infty \ .
$$
Then our goal will be to generate samples from the distribution
\begin{equation}
\label{eq:CompoundDistribution}
p(x) = \frac{1}{Z } \exp \left\{ - u_1 (x) - u_0 (x) \right\} \ ,
\end{equation}
with the interpretation that $u_1$, $u_0$ are the energy functions for the data distribution and prior distribution, respectively.  

\subsection{Proximal Distributions and Generators}

To do this, we introduce the {\em proximal distributions} given by
\begin{align} 
q_0 (x|v) &= \frac{1}{Z_0(v)} \exp \left\{ - u_0 (x) - \frac{1}{2\gamma^2 } \| x-v \|^2 \right\} 
\label{eq:GProx0}\\
q_1 (x|v) &= \frac{1}{Z_1(v)} \exp \left\{ - u_1 (x) - \frac{1}{2\gamma^2 } \| x-v \|^2 \right\} 
\label{eq:GProx1}
\end{align}
where $\gamma$ is a parameter of the proximal distribution and again $Z_0(v)$ and $Z_1(v)$ are normalizing constants that depend on $v$. 
By assumption, $u_0\geq 0$ and $u_1\geq 0$, so the quadratic term implies that $Z_i (v)<\infty $ for all $v\in \RR^{N}$.

With the proximal distributions, we define {\em proximal generators} denoted by $F_0 (v)$ and $F_1 (v)$.
Intuitively, a proximal generator generates a new independent random variable with the proximal distribution.
More specifically, let 
\begin{align}
Y_0 &= F_0 (V) \\ 
Y_1 &= F_1 (V) \ ,
\end{align}
where $V$ is a random vector in $\RR^p$.
Then $Y_0$ and $Y_1$ are assumed conditionally independent of any previously generated random vectors given $V$,
and the conditional densities of $Y_0$ and $Y_1$ given $V$ are given above in~\eqref{eq:GProx0} and~\eqref{eq:GProx1}, respectively.

\subsection{Markov Chains from Proximal Generators}

We can produce a Markov chain (MC) by repeatedly applying the proximal generators.
More specifically, each new state of the MC is generated from the previous state by applying the two proximal generators in sequence.
\begin{align}
X_{n} &= F_1 ( F_0 (X_{n-1})) .
\label{eq:MC1}
\end{align}

Ideally, by repeatedly applying this sequence of operations, the random vector $X_n$ will converge in distribution to samples from $p(x)$.
This isn't quite true, but the following theorem, proved in the appendix, states that when $\gamma$ is small, then the MC has a stationary distribution near $p(x)$ in \eqref{eq:CompoundDistribution} with $u_0$ replaced by a Gaussian convolution approximation to $u_0$.
\begin{thm}
\label{th:OneState}

Let $X_n$ be a Markov chain given by
\begin{equation} \label{eq:MC-theorem1}
    X_n = F_1 ( F_0 (X_{n-1})) \ .
\end{equation}
Then $X_n$ forms a reversible Markov chain with a stationary distribution given by
\begin{equation} \label{eq:stationary-dist}
X_n \sim \tilde{p}_{\gamma^2} (x) = \frac{1}{Z^{\prime}} \exp \left\{ - u_1 (x) - \tilde{u}_0 (x ; \gamma^2) \right\} \ ,
\end{equation}
where
\begin{equation} \label{eq:u0-approx}
\tilde{u}_0 (x; \gamma^2) = -\log \left( e^{-u_0 (x) } * g_{\gamma^2} (x) \right) \ ,
\end{equation}
and $*$ denotes multidimensional convolution with a Gaussian density of variance $\gamma^2$ given by 
\begin{equation} \label{eq:gaussian}
g_{\gamma^2} (x) = \frac{1}{(2\pi \gamma^2 )^{p/2}} \exp \left\{ - \frac{1}{2\gamma^2} \| x \|^2 \right\} \ .
\end{equation}
\end{thm}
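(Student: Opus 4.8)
The plan is to exhibit \eqref{eq:MC-theorem1} as a two-block Gibbs sampler on an augmented pair $(x,y)$, for which detailed balance is transparent, and then to identify the $x$-marginal of the augmented stationary law with $\tilde p_{\gamma^2}$. First I would introduce the joint density on $\RR^p \times \RR^p$,
\[
\pi(x,y) \;=\; \frac{1}{Z''}\,\exp\!\left\{ -u_1(x) - \frac{1}{2\gamma^2}\|x-y\|^2 - u_0(y) \right\},
\]
and observe that its full conditionals are exactly the proximal distributions of \eqref{eq:GProx0}--\eqref{eq:GProx1}: $\pi(y\mid x) = q_0(y\mid x)$ and $\pi(x\mid y) = q_1(x\mid y)$. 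Consequently one step $X_{n-1}\mapsto X_n = F_1(F_0(X_{n-1}))$ consists of drawing $Y \sim \pi(\cdot\mid X_{n-1})$ and then $X_n \sim \pi(\cdot\mid Y)$; since $Y$ is regenerated each step, $(X_n)$ is itself a Markov chain, namely the $x$-marginal chain of the systematic-scan Gibbs sampler for $\pi$.

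Next I would compute the $x$-marginal of $\pi$. Integrating out $y$ and using the Gaussian density \eqref{eq:gaussian},
\[
\pi_X(x) \;\propto\; e^{-u_1(x)} \int_{\RR^p} e^{-\frac{1}{2\gamma^2}\|x-y\|^2}\, e^{-u_0(y)}\,dy \;\propto\; e^{-u_1(x)}\,\bigl(e^{-u_0} * g_{\gamma^2}\bigr)(x),
\]
which by the definition \eqref{eq:u0-approx} of $\tilde u_0$ is proportional to $\exp\{-u_1(x) - \tilde u_0(x;\gamma^2)\}$; hence $\pi_X = \tilde p_{\gamma^2}$ as claimed in \eqref{eq:stationary-dist}. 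Along the way one notes $0 < e^{-u_0}*g_{\gamma^2} \le 1$ (since $u_0 \ge 0$), so $\tilde u_0(\cdot;\gamma^2) \ge 0$ is well defined, and the normalizers $Z''$, $Z'$ are finite under the standing integrability hypotheses together with $u_0,u_1 \ge 0$.

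Finally I would verify reversibility of the composed transition kernel $P(x,x') = \int q_0(y\mid x)\,q_1(x'\mid y)\,dy$ with respect to $\tilde p_{\gamma^2} = \pi_X$. Using $\pi_X(x)\,q_0(y\mid x) = \pi(x,y) = \pi_Y(y)\,\pi(x\mid y)$ and $q_1(x'\mid y) = \pi(x'\mid y)$,
\[
\tilde p_{\gamma^2}(x)\,P(x,x') \;=\; \int_{\RR^p} \pi_Y(y)\,\pi(x\mid y)\,\pi(x'\mid y)\,dy,
\]
which is symmetric under $x \leftrightarrow x'$; this is detailed balance, so $\tilde p_{\gamma^2}$ is stationary and the chain is reversible. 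I expect the only real subtlety to be that composing two reversible kernels is not in general reversible --- the point that makes it work here is that both conditionals come from the \emph{same} augmented joint $\pi$ and that $Y$ is fully resampled each step, so the $y$-variable integrates out symmetrically; the remaining work (justifying the Fubini interchange via nonnegativity, checking finiteness of the normalizing constants, and handling the null set $\{\pi_Y = 0\}$ by noting $\pi(x,y) = 0$ there) is routine bookkeeping.
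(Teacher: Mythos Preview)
Your proposal is correct and follows essentially the same route as the paper: introduce the augmented joint $\pi(x,y)\propto\exp\{-u_1(x)-\tfrac{1}{2\gamma^2}\|x-y\|^2-u_0(y)\}$, recognize $F_1\circ F_0$ as the $x$-marginal of a two-block Gibbs sampler for $\pi$, compute the $x$-marginal of $\pi$ via the Gaussian convolution to obtain $\tilde p_{\gamma^2}$, and verify reversibility by showing $\pi_X(x)P(x,x')=\int \pi_Y(y)\,q_1(x\mid y)\,q_1(x'\mid y)\,dy$ is symmetric in $(x,x')$. The paper organizes this into a separate lemma for stationarity of the joint chain before marginalizing, but the substance---augmentation, Gibbs interpretation, convolution identity, and the same symmetric-integrand detailed-balance computation---is identical to yours.
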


This theorem serves as the basis for the generative Plug-and-Play (GPnP) algorithm.
Assuming that the MC is ergodic, as $n\rightarrow \infty$, the GPnP algorithm will converge to the stationary distribution, $\tilde{p}_{\gamma^2 } (x)$.
Furthermore, this stationary distribution has the property that
\begin{eqnarray}
p (x) = \lim_{\gamma \rightarrow 0} \tilde{p}_{\gamma^2 } (x) \ ,
\label{eq:ApproxDistribution}
\end{eqnarray}
so the samples of the MC become close to the desired distribution as $n\rightarrow \infty$ and $\gamma \rightarrow 0$.

\section{Sampling from the Posterior}
\label{sec:SamplingPosterior}

In this section, we show how GPnP can be used to generate samples from the posterior distribution for a canonical inverse problem with data $y$ and object of interest $x$. 
Given a prior distribution $p_0(x)$ and a forward model $p_{y|x} (y|x)$, we define
\begin{align}
\label{eq:PriorEnergy}
u_0 (x) &= - \log p_0 (x) + C_0 \\
\label{eq:FowardEnergy}
u_1 (x) &= - \log p_{y|x} (y|x) + C_1 \ .
\end{align}
By Bayes' rule, the posterior distribution of $X$ given $Y$ can be expressed as
$$
p_{x|y} (x|y ) = \frac{1}{Z } \exp \left\{ -u_1(x) - u_0 (x) \right\} \ . 
$$
Note that this has the same form as \eqref{eq:CompoundDistribution}.
So Theorem~\ref{th:OneState} implies that the GPnP algorithm can be used to sample from the posterior distribution.

In order to implement the GPnP algorithm, we will need to implement both the forward proximal generator $F_1(v)$  and the prior proximal generator $F_0(v)$.

To implement the prior proximal generator, we use the recent theory of denoising score matching \cite{PVincent2011}.
This theory relates the MMSE denoiser for noise variance of $\sigma^2$ to a modified noisy prior distribution given by
$$
\tilde{p}_{0,\sigma^2} (x) = (p_0  * g_{\sigma^2 }) (x) \ ,
$$
which is a blurred version of the true prior distribution $p_0 (x)$. 
The associated energy function for $\tilde{p}_{0,\sigma^2}$ is then given by
$$
\tilde{u}_0 (x; \sigma^2 ) = - \log \tilde{p}_{0,\sigma^2} (x) + C_0 \ .
$$
As before, $\tilde{u}_0 (x; \sigma^2)$ is not exactly the desired energy function of $u_0(x)$, but as $\sigma \rightarrow 0$ it becomes a good approximation.
Hence we use this energy function to implement the prior proximal generator $\tilde{F}_0 (v ; \sigma )$ in the GPnP algorithm.

Note that the blur introduced from this $\sigma^2$-denoiser is independent from the noise introduced by $\gamma$ in the GPnP Algorithm as specified in Theorem~\ref{th:OneState}. 
This means that if we use $u_1$ and $\tilde{u}_0$ as the forward and prior energy functions, then GPnP will generate samples from the posterior distribution
$$
\tilde{p}_{x|y} (x|y; \sigma^2+\gamma^2 ) = p_{y|x} (y|x) \tilde{p}_{0,\sigma^2+\gamma^2 } (x) \ ,
$$
where $\tilde{p}_{0,\sigma^2+\gamma^2 } (x) =  (\tilde{p}_{0,\sigma^2} * g_{\gamma^2 } ) (x)$ is a version of the prior distribution that is blurred with a Gaussian of variance $\sigma^2+\gamma^2 $.
Again, as $\sigma$ and $\gamma$ become small, we get that
$$
p_{x|y} (x|y ) = \lim_{\sigma \rightarrow 0} \lim_{\gamma \rightarrow 0}  \tilde{p}_{x|y} (x|y; \sigma^2+\gamma^2 ) \ .
$$
So we can use the GPnP algorithm to generate samples from the true posterior distribution of $X$ given $Y$.

The following sections provide more details on how to implement the proximal generators $\tilde{F}_0 (v)$ and $F_1 (v)$.

\subsection{Prior Model Proximal Generator}
\label{sec:PriorModelProximalGenerators}

In this section, we show how to implement the proximal generator, $X=\tilde{F}_0(v)$ of the previous section.
We first define the score of the blurred distribution as 
\begin{equation} \label{eq:score}
s (x ; \sigma^2 ) = - \nabla_x \tilde{u}_0 (x; \sigma^2 ) \ .
\end{equation}
Vincent showed the amazing result that this score can be estimated by minimizing the Denoising Score Matching (DSM) loss~\cite{PVincent2011}.
For the special case of AWGN, the DSM has the form~\cite{SongErmonNeurIPS2020}, 
\begin{equation}
\mbox{Loss}(\theta ; \sigma ) = E\left[ \left\| \frac{W}{\sigma } + s_\theta (X + \sigma W ) \right\|^2 \right] \ ,
\label{eq:DSMLoss}
\end{equation}
where $s_\theta$ is a function parameterized by $\theta$, $X\sim p_0 (x)$ is a random image from the desired prior distribution, and \mbox{$W \sim N(0,I)$} is independent Gaussian white noise.

The key result of Vincent's work is that the loss in \eqref{eq:DSMLoss} is minimized when the function $s_{\theta_\sigma} ( x )$ is equal to the score, $s(x; \sigma^2 )$.
To best estimate the score of the blurred distribution, we choose $\theta$ to be
$$
\theta_\sigma = \arg \min_{\theta } \mbox{Loss}(\theta ; \sigma ) \ .
$$

A more traditional point of view is that \eqref{eq:DSMLoss} implies that the MMSE denoiser with AWGN of variance $\sigma^2$ is given by
\begin{equation} \label{eq:denoise}
\mbox{Denoise} (x; \sigma ) = x + \sigma^2 s_{\theta_\sigma} (x ) \ .
\end{equation}
From this, we see that if we have an MMSE denoiser designed for a noise variance of $\sigma^2$, then we can compute an estimate of the score as
\begin{equation}
s_{\theta_\sigma} (x ) = \frac{1}{\sigma^2 } \left[ \mbox{Denoise} (x; \sigma ) - x \right] \ .
\label{eq:ScoreInTermsOfDenoiser}
\end{equation}

Then a first order Taylor series and completing the square yields an approximate proximal distribution given by
\begin{align} 
\nonumber
&\tilde{q}_0 (x|v; \sigma^2 ) \\
\nonumber
&= \frac{1}{Z(v) } \exp \left\{ - \tilde{u}_0 (x; \sigma^2 ) - \frac{1}{2\gamma^2} \| x-v \|^2 \right\} \\
\nonumber
&\approx \frac{1}{Z^\prime (v)} \exp \left\{ (x-v)^t s_{\theta_\sigma } (v)- \frac{1}{2\gamma^2} \| x-v \|^2 \right\} \\
&= \frac{1}{Z^{\prime \prime} (v)} \exp \left\{ - \frac{1}{2\gamma^2} \| x- [v+\gamma^2 s_{\theta_\sigma } (v) ] \|^2 \right\} \ .
\label{eq:ProximalDistributionApproximation}
\end{align} 
Notice that for this approximation to be accurate, we need that $\gamma << \sigma$ so that the second derivative of the score function is small relative to $1/\gamma^2$.
In order to ensure this, we will express our results in terms of $\beta = \gamma^2/\sigma^2$, where we will pick the parameter $\beta<1$.

Combining \eqref{eq:ScoreInTermsOfDenoiser} and \eqref{eq:ProximalDistributionApproximation}, we can rewrite the proximal generator as
\begin{align}
\tilde{F}_0 (v; \beta, \sigma ) 
&\approx (1-\beta)  v + \beta \, \mbox{Denoise} (v; \sigma ) + \sqrt{\beta} \sigma W \ ,
\label{eq:F0Update}
\end{align}
where $W\sim N(0,I)$, $\beta < 1$, and $\mbox{Denoise} (v, \sigma)$ is an MMSE denoiser designed to remove AWGN of variance $\sigma^2$.

\subsection{Forward Model Proximal Generator}
\label{sec:ForwardModelProximalGenerator}

We first consider the case in which $u_1(x)$ has two continuous derivatives.
In this case, we denote the proximal map for $u_1$ as
\begin{align}
\label{eq:F1bar}
\bar{F}_1(v; \gamma ) 
    &= \arg \min_{x\in \RR^p} \left\{ u_1 (x)  + \frac{1}{2 \gamma^2 } \| x-v \|^2 \right\} \ .
\end{align}
Again, a first order approximation for $u_1$, this time centered at the proximal point $\bar{F}_1(v; \gamma)$, implies that for $\gamma$ small, we can express the proximal distribution as
$$
q_1(x|v; \gamma ) \approx \frac{1}{Z} \exp \left\{ -\frac{1}{2\gamma^2 } \| x- \bar{F}_1(v; \gamma ) \|^2 \right\} \ .
$$
So then for small $\gamma$, the forward model proximal generator can be implemented as 
\begin{align}
F_1 (v; \gamma ) \approx \bar{F}_1(v; \gamma ) + \gamma\, W  \ ,
\label{eq:GeneralForwardModelProximalGenerator}
\end{align}
where $W\sim N(0,I)$.

From this we see that for sufficiently small values of $\gamma$, we can approximate the forward model proximal generator as the forward model proximal map plus Gaussian white noise.
However, in some cases we can practically implement a more accurate proximal generator for larger values of $\gamma$ as discussed in Sections~\ref{sec:LinearForwardModelProximalGenerator} and~\ref{sec:SamplingForwardModelProximalGenerator}.

\begin{figure}
\framebox[3.4in]{
\hspace*{-15pt}
\parbox{3.4in}{
\small
\begin{algorithmic}
\Procedure{GPnP-basic}{$\alpha, \beta , \sigma_{max}, \sigma_{min}, N$ }
    \State $X \gets \sigma_{max} \mbox{RandN}(0,I)+1/2$
    \State $a \gets \left( \frac{\sigma_{min}}{\sigma_{max}} \right)^{1/N}$
    \For{$n=0$ to $N-1$}
        \State $\sigma \gets a^n  \sigma_{max}$
        \State $X \gets (1-\beta ) X + \beta \, \mbox{Denoise} (X; \alpha \sigma) + \sqrt{\beta} \sigma \, \mbox{RandN}(0,I)$
        \State $X \gets \bar{F}_1 (X; \sqrt{\beta} \sigma ) + \sqrt{\beta} \sigma \, \mbox{RandN}(0,I)$
    \EndFor
    \State \textbf{return} $X$
\EndProcedure
\end{algorithmic}
}}
\caption{Generative Plug-and-Play where $\alpha \in[1,1.5]$, $0<\beta <<1$, $\sigma$ decreases by the multiplicative factor $a$ on each iteration, $\bar{F}_1$ denotes the forward model proximal map, and $\mbox{Denoise} (x; \sigma)$ performs MMSE denoising of an image $x$ with AWGN of variance $\sigma^2$.}
\label{al:GPnP-Main}
\end{figure}

\subsection{The GPnP Algorithm}

Algorithm~\ref{al:GPnP-Main} provides a pseudo-code implementation of the GPnP algorithm that starts with a large value of $\sigma$ and then iterates the GPnP proximal generators $\tilde{F}_0$ and $F_1$ for each value of $\sigma$.
Notice that the prior proximal generator, $\tilde{F}_0$, is implemented with the approximation of \eqref{eq:F0Update},
and the forward proximal generator, $F_1$, is implemented as described in \eqref{eq:GeneralForwardModelProximalGenerator} with $\gamma= \sqrt{\beta}\sigma $.

The decreasing sequence of $\sigma$ values is known as annealing and has been shown to dramatically speed convergence to the stationary distribution of the Markov chain by more stably modeling the distribution in low probability regions of the space \cite{SongErmonNeurIPS2019}.
In our experiments, we have found that $\beta=0.25$ works well.
We also incorporate a parameter $\alpha$ to modulate the strength of the denoiser.  This is used to account for inaccuracies in denoiser calibration and to account for the fact that $\sigma$ decreases on each iteration, which means we need to denoise at a higher rate than the current value of $\sigma$.  We use $\alpha\approx 1.3$ in our experiments.

\section{Special Proximal Generators}
\label{sec:SpecialForwardModelProximalGenerators}

In this section, we discuss some special cases of proximal generators that can be useful in practice.

\subsection{Proximal Generator: Linear Forward Model }
\label{sec:LinearForwardModelProximalGenerator}

In this section, we show how to implement the proximal generator, $X=F_1(v)$, where $X\sim q_1 ( x |v) $, for a general linear forward model.
To do this, consider a linear forward model of the form
\begin{equation}  
\label{eq:ForwardModel}
Y = A X + W \ ,  
\end{equation}
where $W\sim N( 0, \Lambda^{-1})$, $A$ is a linear forward operator, and $\Lambda$ is a positive definite precision matrix.
Then the energy function associated with this forward model is given by
\begin{equation}  \label{eq:quadui}
    u_1(x) = \frac{1}{2} \| y- Ax \|_\Lambda^2 \ .  
\end{equation}
The first-order optimality conditions imply that the proximal map for this function is given by
\begin{align}
\bar{F}_1(v; \gamma ) = v + \left( A^T \Lambda A + \frac{1}{\gamma^2 } I \right)^{-1} A^t \Lambda (y - A v) \ .
\label{fig:GeneralLinearProxMap}
\end{align}
We define $R$ to be the conditional covariance given by
\begin{equation}
R = \left( A^T \Lambda A + \frac{1}{\gamma^2 }I \right)^{-1} \label{eq:R} \ .
\end{equation}
The energy function for the proximal distribution for $u_1$ is the objective in \eqref{eq:F1bar}, which has a minimum at the conditional mean $\bar{F}_1(v; \gamma )$ and which has Hessian $R$.  Since the energy function is quadratic, the proximal distribution is exactly
$$
q_1(x|v; \gamma ) = \frac{1}{Z} \exp \left\{ -\frac{1}{2} \| x- \bar{F}_1(v; \gamma ) \|^2_{R^{-1}} \right\} \ .
$$
From this, we see that the proximal generator is given by
$$
F_1 (v; \gamma ) = \bar{F}_1(v; \gamma ) + W_R \ ,
$$
where $W_R\sim N(0,R)$.

\begin{figure*}[ht]
\centering
\includegraphics[width=0.23\textwidth]{./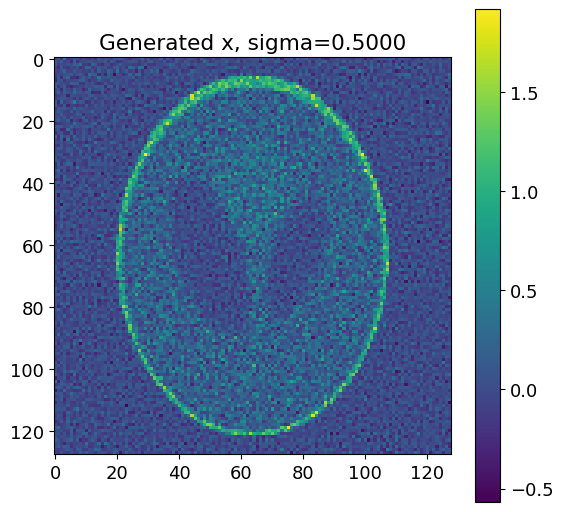}
\includegraphics[width=0.23\textwidth]{./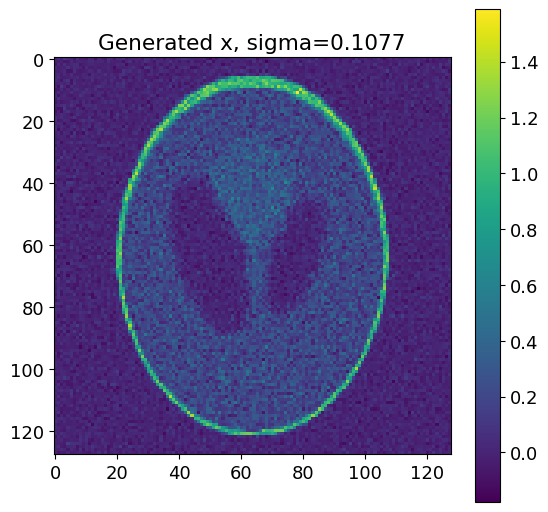}
\includegraphics[width=0.23\textwidth]{./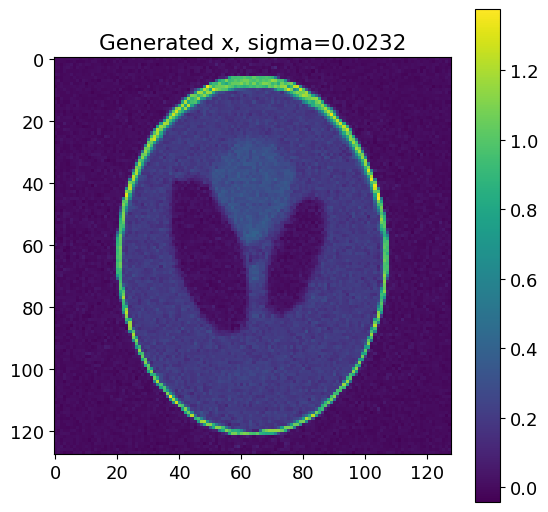}
\includegraphics[width=0.23\textwidth]{./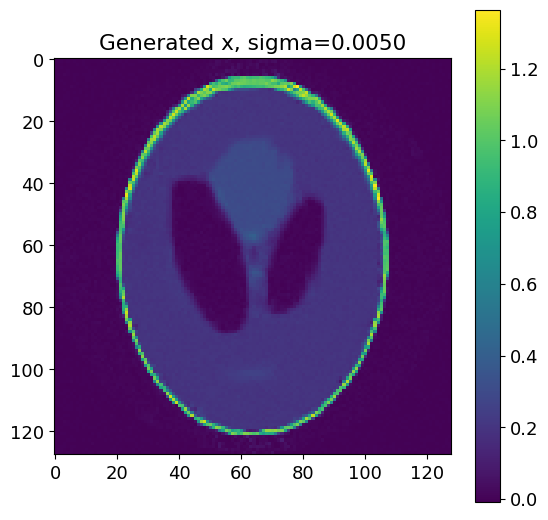}
\hspace*{0.75in}(a) $\sigma=0.500$\hspace*{\fill}(b) $\sigma=0.107$\hspace*{\fill}(c) $\sigma=0.023$\hspace*{\fill}(d) $\sigma=0.005$\hspace*{0.75in}
\newline
\hspace*{\fill} \includegraphics[width=0.25\textwidth]{./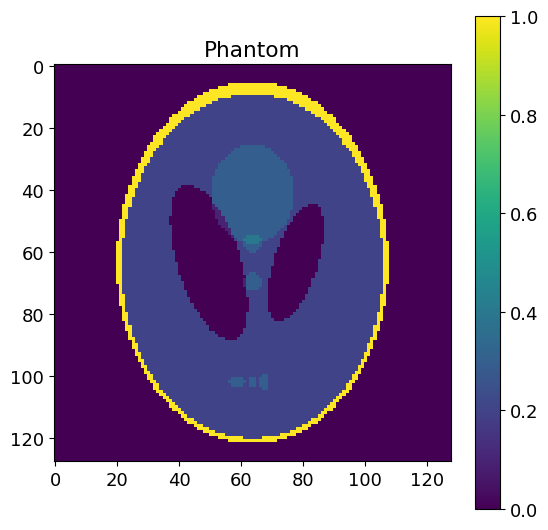} \hspace*{\fill}
\includegraphics[width=0.25\textwidth]{./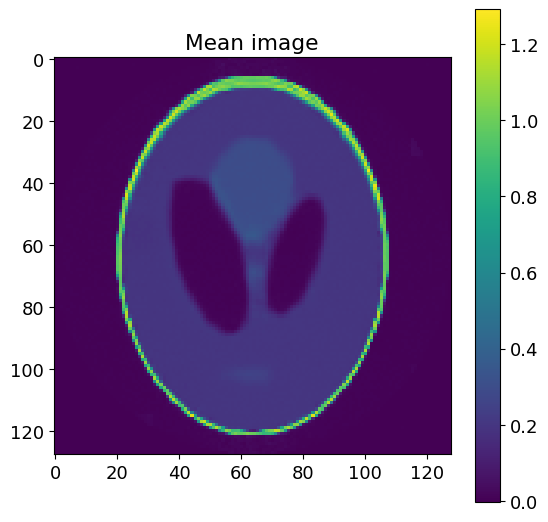}  \hspace*{\fill}
\includegraphics[width=0.25\textwidth]{./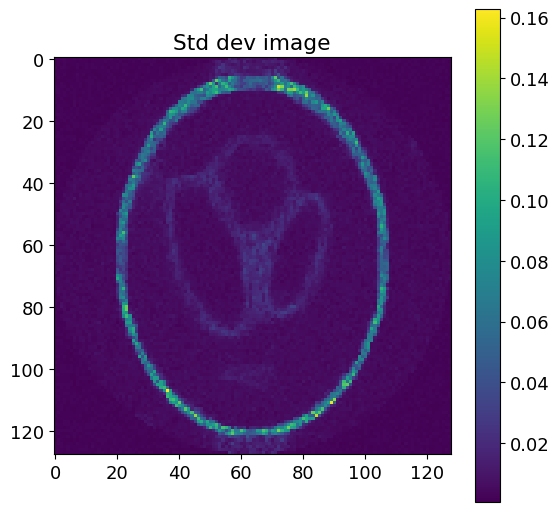} \hspace*{\fill}
\newline
\hspace*{0.75in}(e) Ground Truth\hspace*{\fill}(f) Mean over 10 samples\hspace*{\fill}(g) Std Dev over 10 samples\hspace*{0.5in}
\caption{Tomographic experiment for $128\times 128$ phantom with 16 views: 
(a) - (d) GPnP outputs with decreasing values of $\sigma$;
(e) phantom; (f) mean and (g) standard deviation over 10 trials.
}
\label{fig:GPnPTomographyAnneal}
\end{figure*}

\begin{figure*}[ht]
\centering
\includegraphics[width=0.23\textwidth]{./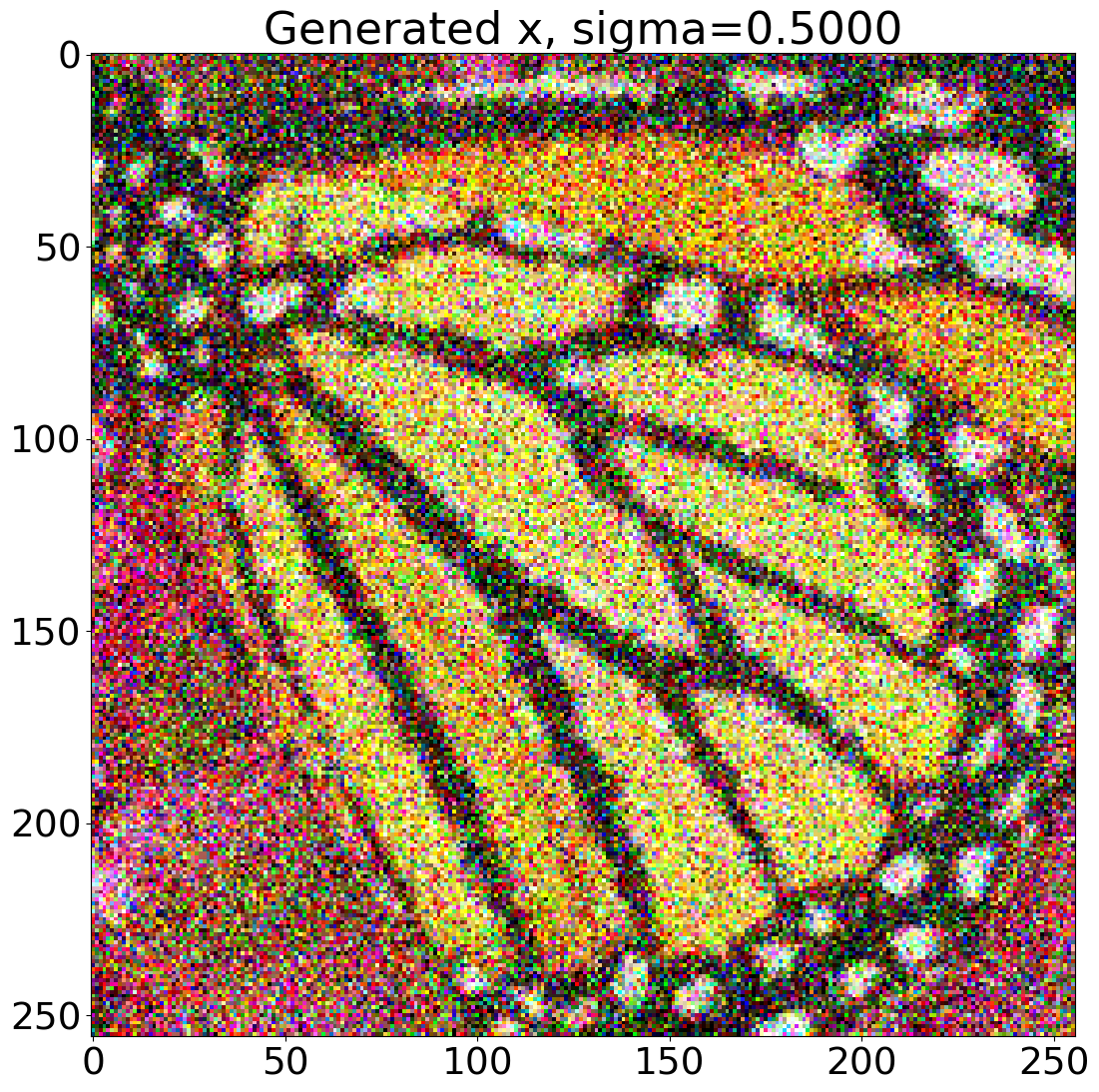}
\includegraphics[width=0.23\textwidth]{./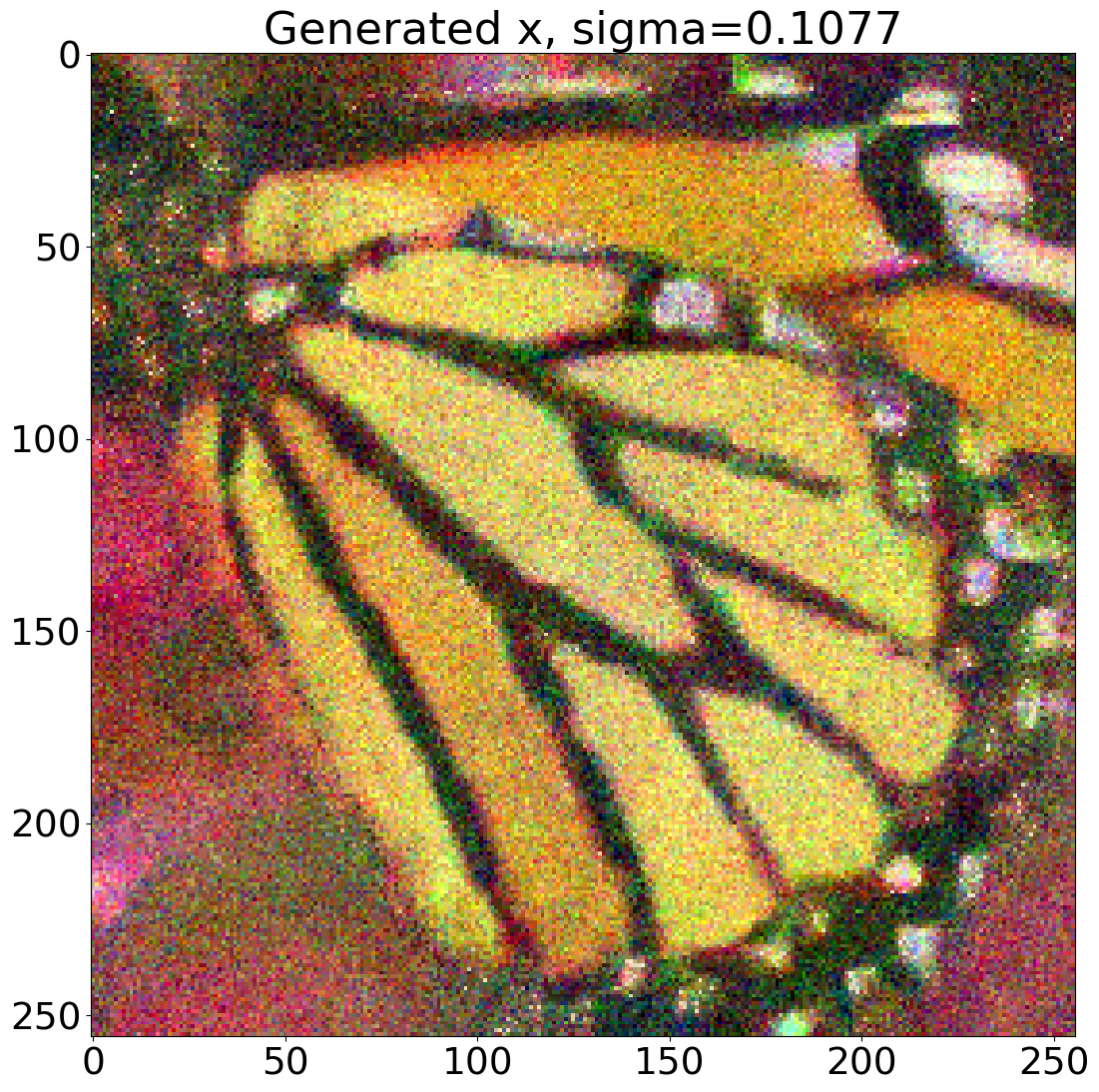}
\includegraphics[width=0.23\textwidth]{./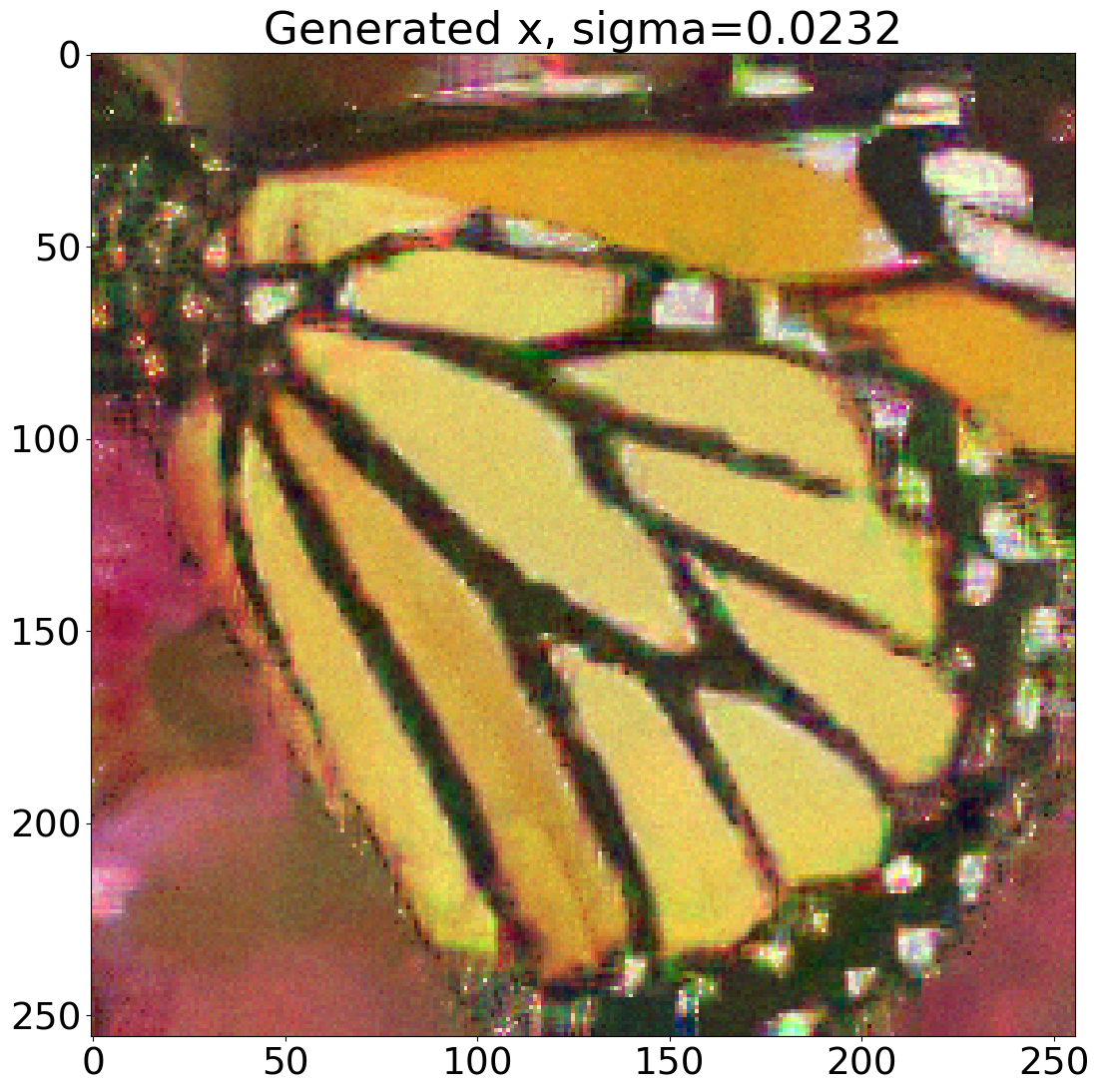}
\includegraphics[width=0.23\textwidth]{./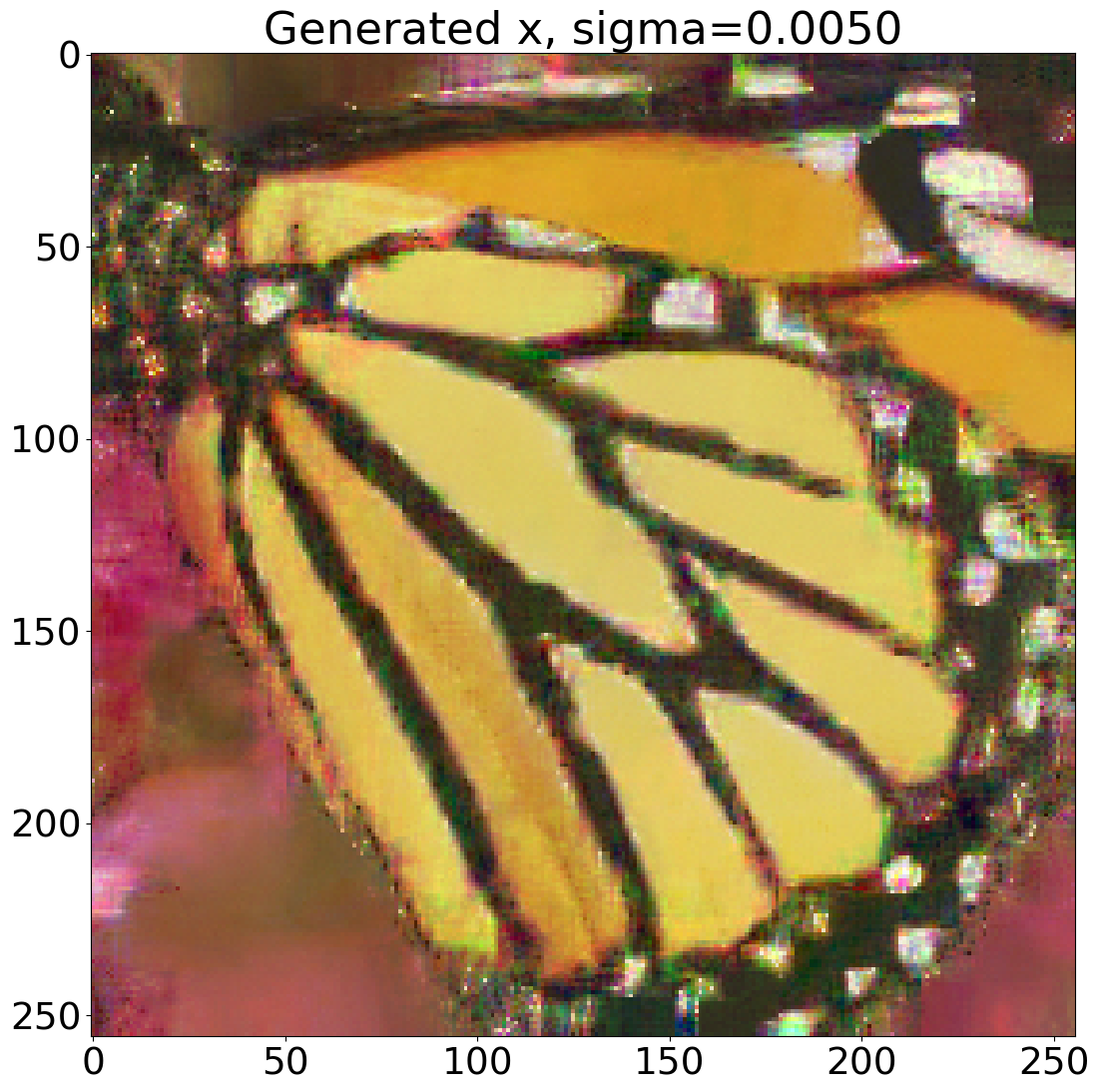}
\hspace*{0.75in}(a) $\sigma=0.500$\hspace*{\fill}(b) $\sigma=0.108$\hspace*{\fill}(c) $\sigma=0.023$\hspace*{\fill}(d) $\sigma=0.005$\hspace*{0.75in}
\newline
\hspace*{\fill} \includegraphics[width=0.25\textwidth]{./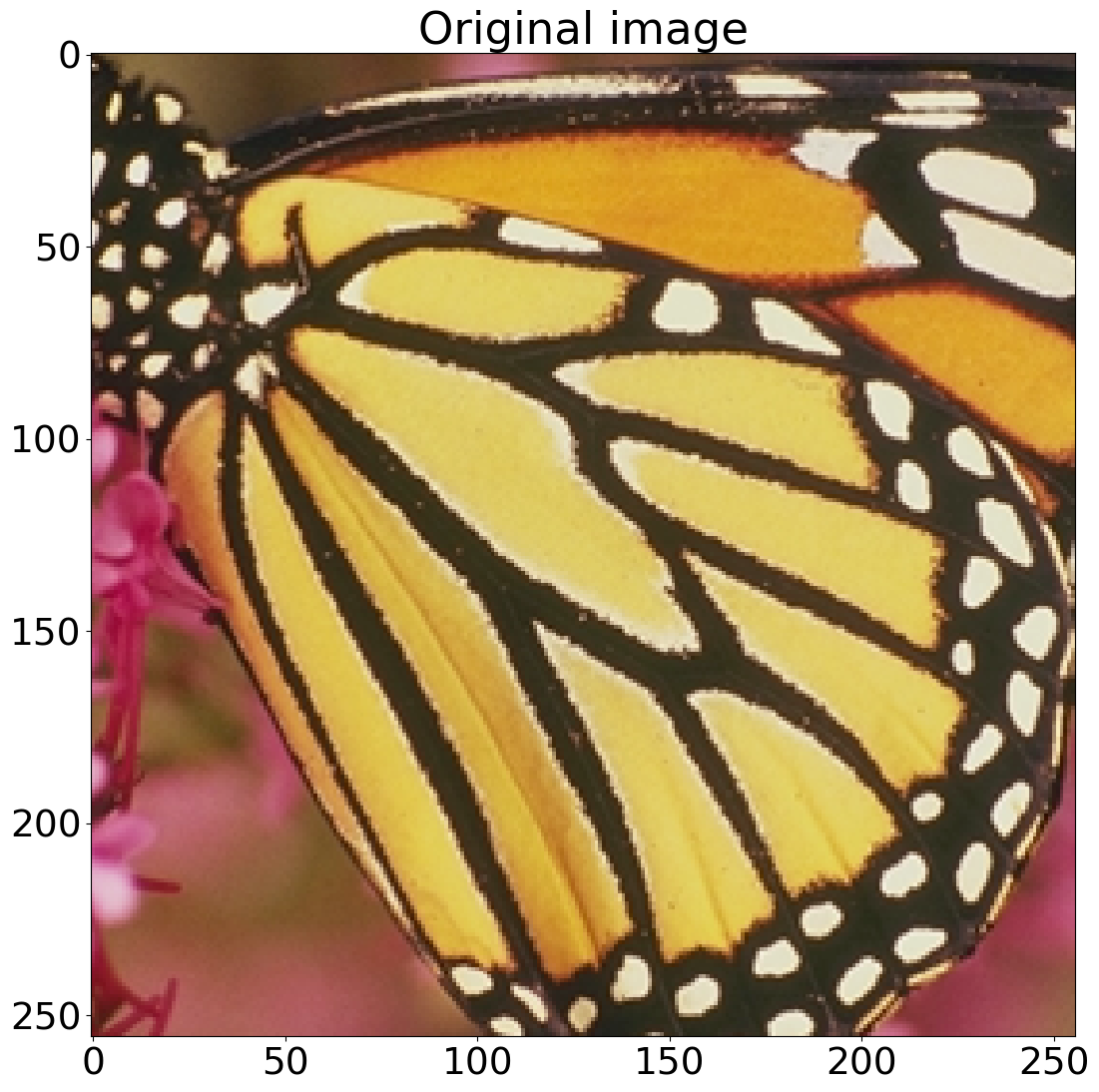} \hspace*{\fill}
\includegraphics[width=0.25\textwidth]{./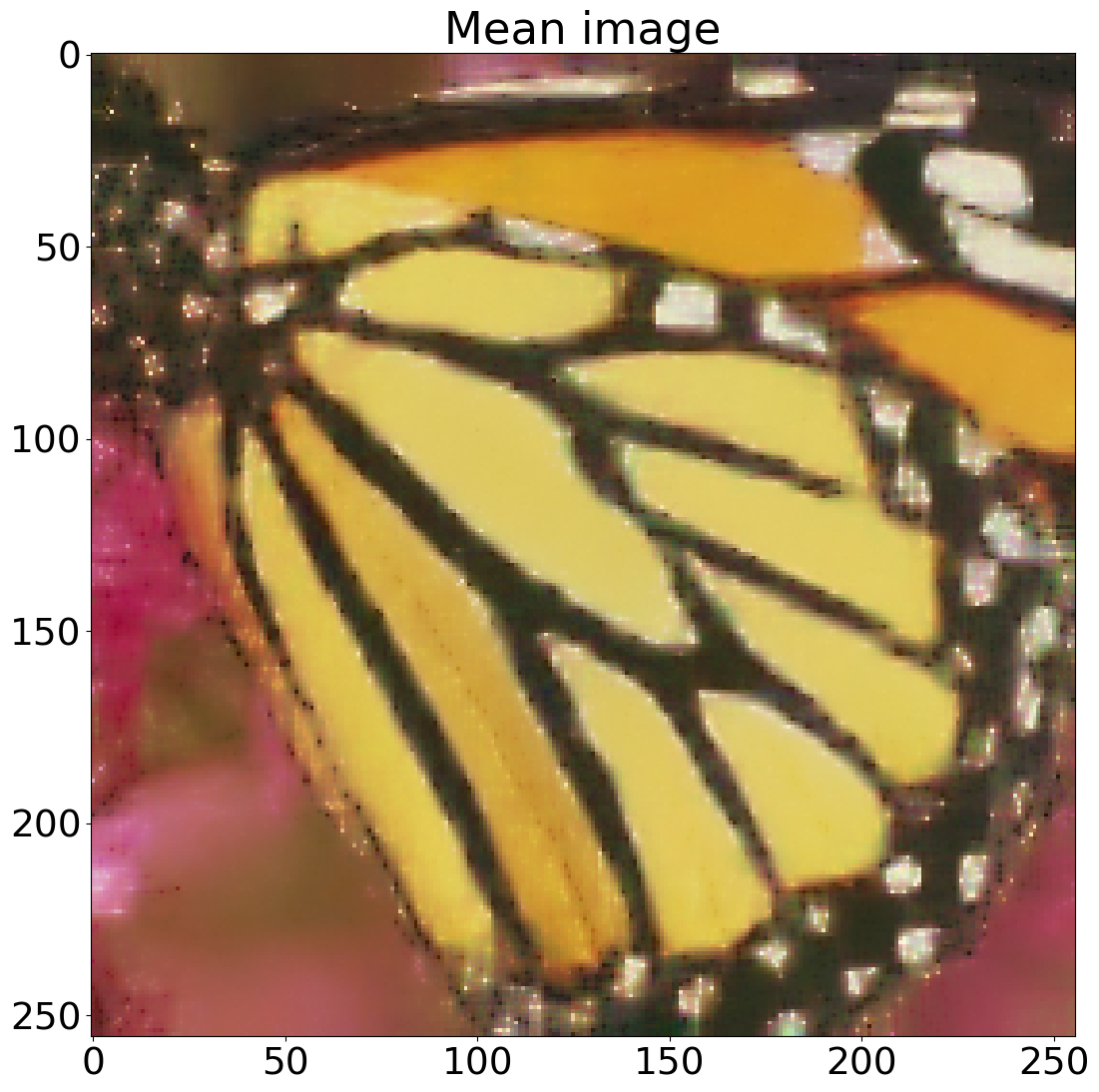}  \hspace*{\fill}
\includegraphics[width=0.25\textwidth]{./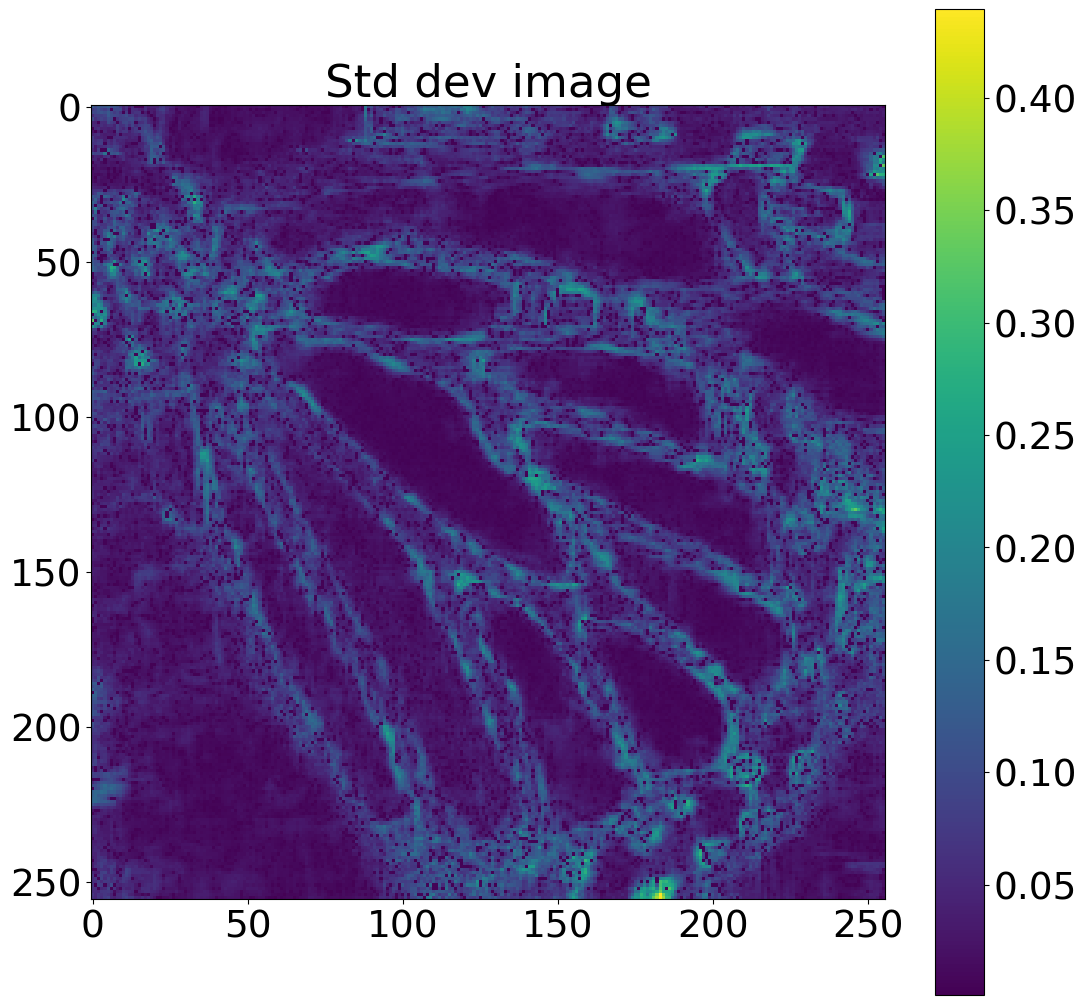} \hspace*{\fill}
\newline
\hspace*{0.75in}(e) Ground Truth\hspace*{\fill}(f) Mean over 10 samples\hspace*{\fill}(g) Std Dev over 10 samples\hspace*{0.4in}
\caption{Interpolation experiment for $256\times 256$ RGB image with 10\% of pixels sampled uniformly at random, starting from radial basis function interpolation plus AWGN with $\sigma=0.5$: 
(a) - (d) GPnP outputs with decreasing values of $\sigma$;
(e) original image; (f) mean and (g) standard deviation over 10 trials.
}
\label{fig:GPnPInterpolationAnneal}
\end{figure*}

\begin{table}[ht]
    \centering
    {\small
    \begin{tabular}{|c|c|c|c|c|c|c|} \hline
                        & $N$   & $\sigma_{max}$ & $\sigma_{min}$ & $\beta$ & $\alpha$ & $\sigma_y$ \\ \hline 
         Subsampling & 100 & 0.5 & 0.005 & 0.25 & 1.3 & 0.005 \\ \hline
         Tomography     & 100 & 0.5 & 0.005 & 0.25 & 1.3 & 0.25  \\ \hline 
    \end{tabular}
    }
    \caption{Parameters used in experiments}
    \label{tab:ListOfParameters}
\end{table}

\subsection{Proximal Generator: Subsampling }
\label{sec:SamplingForwardModelProximalGenerator}

Another useful special case occurs when our measurements are samples at selected pixels.
Let $S_m$ be a set of measurement points so that
$$
Y_s = X_s + W_s \ ,
$$
where $W_s \sim N(0, \sigma_y^2 )$ are i.i.d.\ noise samples.
In this case, the energy function associated with this forward model is given by
\begin{equation}  
\label{eq:SparseSampleEnergy}
u_1(x) = \sum_{s \in S_m} 
    \frac{1}{2 \sigma_y^2 } ( y_s - x_s )^2 \ .
\end{equation}
The first-order optimality conditions imply that the associated proximal map is given by 
\begin{align}
\label{eq:SparseSampleProxMap}
\left[ \bar{F}_1(v; \gamma ) \right]_s =
\left\{ 
\begin{array}{ll}
v_s + \frac{\gamma^2}{ \sigma_y^2 + \gamma^2 } (y_s - v_s) & \mbox{if $s\in S_m$} \\
v_s & \mbox{if $s\notin S_m$} \ .
\end{array}
\right.
\end{align}
From the result of Section~\ref{sec:LinearForwardModelProximalGenerator} the forward model proximal generator is given by
$$
\left[ F_1(v; \gamma ) \right]_s =
\left\{ 
\begin{array}{ll}
\left[ \bar{F}_1(v;\gamma ) \right]_s + \sqrt{ \frac{\sigma_y^2 \gamma^2}{\sigma_y^2 + \gamma^2 }} W_s & \mbox{if $s\in S_m$} \\
\left[ \bar{F}_1(v; \gamma ) \right]_s + \gamma W_s& \mbox{if $s\notin S_m$} 
\end{array}
\right. \ .
$$
where $W_s \sim N(0,1)$ are i.i.d.\ Gaussian random variables.

\subsection{Sampling from the Prior}
\label{sec:SamplingPrior}

Below we derive the update equations for sampling from the prior distribution.
In this case, we set $u_1 (x) = 0$, so from \eqref{eq:GeneralForwardModelProximalGenerator}, the forward proximal generator is exactly
$$
F_1 (v; \gamma ) = v + \gamma W \ ,
$$
where $W\sim N(0,I)$.  Then for small $\gamma$, \eqref{eq:denoise}, \eqref{eq:F0Update}, and the definition of $\beta$ imply that the prior model proximal generator is 
$$
\tilde{F}_0 ( v ) = v + \gamma^2 s_{\theta_\sigma } (v) + \gamma W^\prime \ ,
$$
where $W^\prime \sim N(0,I)$ is independent of $W$. 
Taking the composition of $\tilde{F}_0$ followed by $F_1$ results in the update 
$$
X_n = X_{n-1} + \gamma^2 s_{\theta_\sigma } (X_{n-1}) + \sqrt{2} \gamma W \ ,
$$
which is the familiar Langevin update equation \cite{GrenanderMiller94}.
Rewriting in terms of the denoiser and the parameters $\beta,\sigma$ results in the following recursion that generates samples from the posterior distribution for small $\sigma$
\begin{equation}
X_n = (1- \beta ) X_{n-1} + \beta \, \mbox{Denoise} (X_{n-1}; \sigma) + \sqrt{2 \beta } \sigma W \ ,
\label{eq:LangevinDenoise}
\end{equation}
where $W\sim N(0,I)$.

\section{Results}
\label{sec:Results}

In this section, we present experimental results using the GPnP algorithm to sample from the posterior distribution of a model.
We consider the cases of sparse image interpolation from Section~\ref{sec:SamplingForwardModelProximalGenerator} and 2D parallel-beam, sparse-view tomographic reconstruction from Section~\ref{sec:LinearForwardModelProximalGenerator}.
Table~\ref{tab:ListOfParameters} lists the parameters used for both experiments. 
For both experiments, the BM3D denoiser \cite{DabovBM3D07} was used as an implicit prior model.
However, we have found that more advanced, domain-specific denoisers such those used in \cite{SongErmonNeurIPS2020} can yield better results.

Figure~\ref{fig:GPnPTomographyAnneal} shows the results for the case of tomographic reconstruction using 8 views of a $128\times 128$ phantom. The algorithm was implemented using the SVMBIR tomographic software package \cite{svmbir-2020}.
Figures~\ref{fig:GPnPTomographyAnneal}(a) to (d) show a typical progression of samples for the GPnP algorithm as $\sigma$ decreases. For large values of $\sigma$, the prior is essentially white noise, so the reconstructed image has similar attributes. 
As $\sigma$ decreases, the image stabilizes to a less noisy image, but each trial produces a somewhat different result that represents the variation in the posterior distribution.
Figures~\ref{fig:GPnPTomographyAnneal}(f) and (g) show the mean and standard deviation over 10 trails. 
Notice that Figure~\ref{fig:GPnPTomographyAnneal}(g) shows that most of the variation occurs near edges, which is what one might expect.

Figure~\ref{fig:GPnPInterpolationAnneal} shows similar results for sparse interpolation from $10\%$ of the pixels from a color $256\times 256$ ground-truth image. 
This gives results qualitatively similar to the tomography case, with most of the variation occurring along image edges.

\section{Conclusion}
\label{sec:Conclusion}

In this paper, we presented a novel theory for Generative PnP, a generalization of the PnP that allows for sampling from the posterior distribution given a forward model and a prior specified using a MMSE denoising algorithm. 
As with PnP, GPnP has a modular implementation in which two proximal generators are alternately applied.
The proximal generators generate conditionally independent random variables from a distribution inspired by the proximal map and in practice can be implemented by adding noise to the conventional proximal map.

Our key theoretical result is that the sequence generated by GPnP forms a reversible Markov chain with the desired posterior distribution.
Our experimental results indicate that the algorithm can be robustly implemented for simple inverse problems such as sparse interpolation and 2D parallel beam tomographic reconstruction.

\appendix

\section{Proofs}

We first prove the following lemma
\begin{lemma} 
Let $[X_{n,0},X_{n,1}]$ be a Markov chain such that
\begin{align*}
V_n &= F_0 ( X_{n-1,1} ) \\
[X_{n,0},X_{n,1}] &= [ V_n , F_1 ( V_n ) ] \ .
\end{align*}
Then $[X_{n,0},X_{n,1}]$ is a Markov chain in time, $n$, with a stationary distribution given by
$$
[X_{n,0},X_{n,1}] \sim p (x_0,x_1)
$$
where
$$
p (x_0,x_1) = \frac{1}{Z} \exp \left\{- u_1 (x_1) - u_0(x_0) - 
\frac{1}{2\gamma^2} \| x_0 -x_1 \|^2 \right\} \ .
$$
\label{th:TwoState}
\end{lemma}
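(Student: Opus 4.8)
The plan is to recognize the stated recursion as a systematic-scan, two-block Gibbs sampler for the joint distribution $p(x_0,x_1)$, and then to conclude stationarity either from the standard invariance property of a Gibbs sweep or, equivalently, from a short direct integration. The first step is to write down the one-step transition kernel of the pair-valued chain. Since $V_n = F_0(X_{n-1,1})$ depends on the previous state only through its second coordinate, and since $X_{n,0}=V_n$ and $X_{n,1}=F_1(V_n)$, the conditional-independence assumption on the proximal generators gives
\[
P\big([x_0,x_1]\to[x_0',x_1']\big)=q_0(x_0'\mid x_1)\,q_1(x_1'\mid x_0') \ ,
\]
which does not depend on the discarded coordinate $x_0$; this also makes explicit that $[X_{n,0},X_{n,1}]$ is Markov in $n$.

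Next I would compute the full conditionals of the candidate stationary law $p(x_0,x_1)\propto\exp\{-u_1(x_1)-u_0(x_0)-\tfrac{1}{2\gamma^2}\|x_0-x_1\|^2\}$. Integrating out one coordinate at a time yields the two marginals
\[
m_1(x_1)=\tfrac{1}{Z}\,e^{-u_1(x_1)}Z_0(x_1),\qquad m_0(x_0)=\tfrac{1}{Z}\,e^{-u_0(x_0)}Z_1(x_0) \ ,
\]
and dividing shows $p(x_0\mid x_1)=q_0(x_0\mid x_1)$ and $p(x_1\mid x_0)=q_1(x_1\mid x_0)$. That is, the proximal distributions $q_0,q_1$ are exactly the full conditionals of $p(x_0,x_1)$, so the recursion ``draw $x_0'\sim p(\cdot\mid x_1)$, then draw $x_1'\sim p(\cdot\mid x_0')$'' is precisely one sweep of systematic-scan Gibbs sampling for $p$, which therefore leaves $p$ invariant.

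For a self-contained argument I would also verify invariance by direct integration: if $[X_{n-1,0},X_{n-1,1}]\sim p$, then
\[
\int p(x_0,x_1)\,q_0(x_0'\mid x_1)\,q_1(x_1'\mid x_0')\,dx_0\,dx_1=q_1(x_1'\mid x_0')\!\int m_1(x_1)\,q_0(x_0'\mid x_1)\,dx_1 \ ,
\]
and inside the remaining integral the $Z_0(x_1)$ factors cancel, leaving $m_0(x_0')=\tfrac{1}{Z}e^{-u_0(x_0')}Z_1(x_0')$; multiplying by the prefactor $q_1(x_1'\mid x_0')$ reconstitutes $p(x_0',x_1')$, which is the stationarity identity.

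I do not expect a substantive obstacle here; the proof is short once the Gibbs structure is seen. The two points that need a little care are the bookkeeping that the ``old'' coordinate $x_0$ is genuinely overwritten (so the kernel is $x_0$-free and the chain really is a Gibbs sampler rather than something more complicated), and checking that $p(x_0,x_1)$ is a proper probability density under the standing assumptions on $u_0,u_1$, so that the marginals and conditionals used above are all well defined.
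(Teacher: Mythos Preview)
Your proposal is correct and follows essentially the same approach as the paper: you identify the recursion as a two-block systematic-scan Gibbs sampler for $p(x_0,x_1)$ by checking that $q_0,q_1$ are its full conditionals, and then verify invariance by the same direct integration the paper carries out. Your write-up is slightly more explicit about the transition kernel and the marginals, but the argument is the same.
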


\begin{proof}[Proof of Lemma~\ref{th:TwoState}]

Note that the distribution $p (x_0,x_1)$ has conditional distributions 
\begin{eqnarray*}
p_{0|1} (x_0|x_1) &=& q_0 ( x_0| x_1 ) \\
p_{1|0} (x_1|x_0) &=& q_1 ( x_1| x_0 ) \ .
\end{eqnarray*}
Hence the Markov chain is an implementation of a Gibbs sampler that first replaces $X_{k,0}$ with a conditionally independent random variable from its conditional distribution, and then replaces $X_{k,1}$ with a conditionally independent random variable from its conditional distribution~\cite{GemanGeman1984,BoumanBook2022}.

For notational compactness, we denote the state at time $n-1$ by $(x_0 ,x_1)$, and the state at time $n$ by $(x_0^\prime ,x_1^\prime )$, and let $q (x_0^\prime ,x_1^\prime ) $ denote the distribution of the state at time $n$.
Defining $p_1(x_1) = \int p(x_0, x_1) dx_0$, we have $p(x_0, x_1) = q_0(x_0 | x_1) p_1(x_1)$.  Then standard manipulations give
\begin{align*}
q (x_0^\prime, &x_1^\prime )
= \int \int q_1 ( x_1^\prime | x_0^\prime ) q_0 ( x_0^\prime | x_1 ) p(x_0,x_1) dx_0 dx_1 \\
&= \int \int q_1 ( x_1^\prime | x_0^\prime ) q_0 ( x_0^\prime | x_1 ) q_0 ( x_0 | x_1 ) p_1(x_1) dx_0 dx_1 \\
&= \int q_1 ( x_1^\prime | x_0^\prime ) q_0 ( x_0^\prime | x_1 ) \int q_0 ( x_0 | x_1 ) dx_0 \, p_1(x_1) dx_1 \\
&= \int q_1 ( x_1^\prime | x_0^\prime ) q_0 ( x_0^\prime | x_1 ) p_1(x_1) dx_1 \\
&= \int q_1 ( x_1^\prime | x_0^\prime ) p ( x_0^\prime , x_1) dx_1.
\end{align*} 
Since $q_1 ( x_1^\prime | x_0^\prime )$ does not depend on $x_1$, while 
$p ( x_0^\prime , x_1) dx_1$ integrates to $p_0 ( x_0^\prime )$, this simplifies to give 
$$q_1 ( x_1^\prime | x_0^\prime ) p_0 ( x_0^\prime ) = q (x_0^\prime, x_1^\prime ) = p ( x_0^\prime , x_1^\prime ).$$ 
Hence $p ( x_0 , x_1 )$ is a stationary distribution of the Markov chain.
\end{proof}

\begin{proof}[Proof of Theorem~\ref{th:OneState}]
Recall that $X_n = F_1(F_0(X_{n-1})$ from Theorem~\ref{th:OneState}, so that $X_n$ is a Markov chain that equals $X_{n,1}$ in Lemma~\ref{th:OneState}.     
By Lemma~\ref{th:TwoState} we know that $X_{n,1}$ has a stationary distribution given by
\begin{align*}
p(&  x_1 ) = \int p( x_0 , x_1 ) d x_0 \\
    &= \int \frac{1}{Z } \exp \left\{- u_1 (x_1) - u_0(x_0) - \frac{1}{2\gamma^2} \| x_0 -x_1 \|^2 \right\} d x_0 \\
    &= \frac{1}{Z } \exp \left\{- u_1 (x_1) \right\} \cdot \\
    & \rule{20mm}{0pt}  \int \exp \left\{ - u_0(x_0) - \frac{1}{2\gamma^2 } \| x_0 -x_1 \|^2 \right\} d x_0 \ .
\end{align*}
Then notice that
\begin{align*}
\int \exp & \left\{ - u_0(x_0) - \frac{1}{2\gamma^2} \| x_0 -x_1 \|^2 \right\} d x_0 \\
&= \int e^{- u_0(x_0)} \exp \left\{ - \frac{1}{2\gamma^2} \| x_0 -x_1 \|^2 \right\} d x_0 \\
&= \left( e^{- u_0(\cdot)} * \exp \left\{ - \frac{1}{2\gamma^2} \| \cdot \|^2 \right\} \right)(x_1)\\
&= (2 \pi \gamma^2 )^{p/2} \left( e^{- u_0(\cdot)} * g_{\gamma^2} \right) (x_1 ) \\
&= (2 \pi \gamma^2 )^{p/2} \exp \left\{ - \tilde{u}_0 ( x_1 ) \right\} \ ,
\end{align*}
where 
$$
\tilde{u}_0  = -\log \left( e^{-u_0 } * g_{\gamma^2}  \right) \ .
$$
So we have that
\begin{eqnarray*}
p( x_1 ) 
    &=& \frac{1}{Z^{\prime }} \exp \left\{- u_1 (x_1) \right\} \exp \left\{ - \tilde{u}_0 ( x_0 ) \right\} \\
    &=& \frac{1}{Z^{\prime }} \exp \left\{- u_1 (x_1) - \tilde{u}_0 ( x_0 ) \right\} \ ,
\end{eqnarray*}
where $Z^\prime = Z / (2 \pi \gamma^2 )^{p/2}$.

To show reversibility, we denote the joint distribution of $(X_{n,1}, X_{n-1,1})$ as $q (x_1^\prime ,x_1 )$.  As in Lemma~\ref{th:TwoState}, we define $p_0(x_0) = \int p(x_0, x_1) dx_1$ and note that $p(x_0, x_1) = q_1(x_1 | x_0) p_0(x_0)$.  Then we have 
\begin{align*}
q (x_1^\prime , &x_1 ) = \int \int q_1 ( x_1^\prime | x_0^\prime ) q_0 ( x_0^\prime | x_1 ) p(x_0,x_1) dx_0 dx_0^\prime \\
=& \int \int q_1 ( x_1^\prime | x_0^\prime ) q_0 ( x_0^\prime | x_1 ) q_0 ( x_0 | x_1 ) p_1 (x_1) dx_0 dx_0^\prime \\
=& \int q_1 ( x_1^\prime | x_0^\prime ) q_0 ( x_0^\prime | x_1 ) \int q_0 ( x_0 | x_1 ) dx_0 \, p_1 (x_1) dx_0^\prime \\
=& \int q_1 ( x_1^\prime | x_0^\prime ) q_0 ( x_0^\prime | x_1 ) p_1 (x_1) dx_0^\prime \\
=& \int q_1 ( x_1^\prime | x_0^\prime ) p ( x_0^\prime , x_1) dx_0^\prime \\
=& \int q_1 ( x_1^\prime | x_0^\prime ) q_1 ( x_1 | x_0^\prime ) p_0 ( x_0^\prime ) dx_0^\prime \ .
\end{align*}
Since $q_1 ( x_1^\prime | x_0^\prime ) q_1 ( x_1 | x_0^\prime )$ is symmetric in $x_1$ and $x_1'$, this implies that 
$$
q (x_1^\prime ,x_1 ) = q ( x_1 , x_1^\prime ) \ ,
$$
which means that the Markov chain $X_{n,1}$ is reversible.
\end{proof}

\bibliographystyle{IEEEtranD}
\bibliography{References}

\end{document}